\algnewcommand\algorithmicforeach{\textbf{for each}}
\DeclareMathOperator*{\argmin}{argmin}
\renewcommand{\vec}[1]{\mathbf{#1}}
\pgfplotsset{compat=1.16}
\newtheorem{theorem}{Theorem}
\newtheorem{lemma}[theorem]{Lemma}
\title{A Data Science Approach to Risk Assessment for Automobile Insurance Policies}
\author{ \href{https://orcid.org/0000-0003-1729-559X}{\includegraphics[scale=0.06]{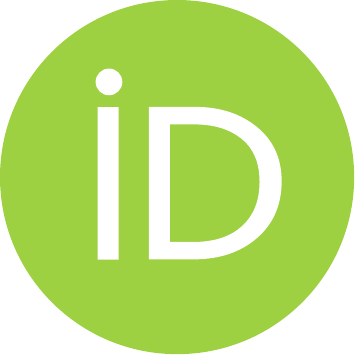}\hspace{1mm}Patrick Hosein}\\
	The University of the West Indies,
	St. Augustine, Trinidad\\
	\texttt{patrick.hosein@sta.uwi.edu} 
}
\begin{document}

\maketitle

\begin{abstract}
In order to determine a suitable automobile insurance policy premium one needs to take into account three factors, the risk associated with the drivers and cars on the policy, the operational costs associated with management of the policy and the desired profit margin. The premium should then be some function of these three values. We focus on risk assessment using a Data Science approach. Instead of using the traditional frequency and severity metrics we instead predict the total claims that will be made by a new customer using historical data of current and past policies. Given multiple features of the policy (age and gender of drivers, value of car, previous accidents, etc.) one can potentially try to provide personalized insurance policies based specifically on these features as follows. We can compute the average claims made per year of all past and current policies with identical features and then take an average over these claim rates. Unfortunately there may not be sufficient samples to obtain a robust average. We can instead try to include policies that are``similar" to obtain sufficient samples for a robust average. We therefore face a trade-off between personalization (only using closely similar policies) and robustness (extending the domain far enough to capture sufficient samples). This is known as the Bias-Variance Trade-off. We model this problem and determine the optimal trade-off between the two (i.e. the balance that provides the highest prediction accuracy) and apply it to the claim rate prediction problem. We demonstrate our approach using real data.
\end{abstract}

\keywords{
Risk \and Motor Insurance \and Machine Learning \and Premium Pricing \and Claims Prediction
}

\section{Introduction}

Traditionally insurance companies have determined automobile policy premiums using rate tables computed by Actuaries \cite{Hassani2020BigDA}. Today, however, the vast amount of data collected in electronic form can now be used to determine more suitable premiums for a given policy since such data can be used to better predict risk \cite{errais2019pricing}. Furthermore, by using data from present and past customers, the predictions are better suited for the particular environment in which the insurance company operates. This form of personalized policies benefit the customer (who pays an amount more in line with their risk) as well as the insurance company (which can now better ensure that it can safely cover claims costs from risky policies). The typical approach is straightforward. For a given new customer, one can use historical data of past and present customers with similar characteristics (features) to better estimate the risk level of the new customer and then use this to determine a premium for their policy. This is similar to recommender systems used by companies such as Netflix. In that case movies are recommended to an individual based on movies that were enjoyed by customers with similar characteristics (collaborative filtering). In the case of insurance, one must recommend a policy that is both desirable to the customer (through personalization) and profitable to the insurance provider.

\section{Related Work and Contributions}

Many past papers have focused on Recommender systems for insurance companies where one of a small number of insurance products is offered. In \cite{https://doi.org/10.1002/widm.1363, 10.1145/3109859.3109907} they used historical data of existing and past customers to determine the most suitable policy for a new customer. In this case a relatively small number of insurance products 
are available and hence the number of customers who have been using a specific product will be sufficiently high so that the sample size is not an issue when computing the recommendation. The paper \cite{8614094} also addresses the same problem but focuses on speeding up the computation of the recommendations. The papers \cite{8622491, BIAN201820} do address personalized auto insurance premiums but they focus on using Telematics data to do so. Such devices are not available from all insurance companies and hence has limited applicability. The authors in \cite{doi:10.1177/0972150920932287} use Fuzzy Logic to come up with a rule based approach to risk. In our case we use a data science approach and focus on personalization while using traditionally available data.

Several papers also focus on risk assessment. In general, few customers make claims during a year. Furthermore, the claims that are made vary widely from minor incidents (such as a scratched bumper) to major ones (such as when a car has to be written off because it cannot be repaired). This results in a large variation in the average annual claims made by a customer making it difficult to predict. Therefore papers generally focus on predicting either, severity (the expected claim value given that a claim is made, \cite{10.1145/3109859.3109907, su2020stochastic}) or frequency (the expected number of claims made per year, \cite{liu2014using, david2015modeling}). Another typical metric is the loss ratio which is the ratio of the claims made to the premium charged (\cite{guelman2012gradient, zhang2013predicting}. We focus on a more direct measure which is the average total value of claims made per year for a given policy (shortened to simply claims rate) which can be thought of as the product of severity and frequency and hence captures both metrics. As mentioned before, predicting claims rate can be challenging because of its high variance. A significant number of samples are required for a good estimate but as one tries to achieve greater personalization the number of available samples decreases. We investigate the optimal trade-off between these objectives.

Note one potential issue of Recommender Systems is the following. The Recommender system chooses the most appropriate product for the customer but this may not be a very profitable product for the company and so this trade-off must be taken into account (see \cite{hosein2019recommendations} for a more detailed discussion on this issue). In our case we need not worry about this issue since we are focusing on providing the most suitable (unique) product and the premium is then determined to achieve an acceptable profit for each unique policy.

Our contribution is in the analysis of the trade-off between personalization and robustness. Instead of a finite number of products from which to choose for customer offerings, we provide a unique (personalized) product to each customer. Furthermore, we only take into account demographic and other data collected from each policy but do not consider Telematics data. Naturally our approach can include such data as well. Therefore we are (a) using a new model for risk (based on claim rate), (b) obtaining the best trade-off between personalization and robustness, (c) using the proposed approach for feature importance and selection and (d) demonstrating how the results obtained can be interpreted so that one can explain to the customer the reason for the provided premium.

\section{Problem Formulation and Assumptions}

We formulate a model for this problem and then develop an algorithm for its solution.
Our objective is as follows, given policy information for a new or renewal automobile policy (i.e., information about the drivers, cars, etc.), predict the expected total amount in claims that will have to be paid out to this customer over the subsequent year of the policy. This prediction will be based on several factors but correlates with the risk associated with the drivers and cars on the policy. This information can then be used to determine an appropriate premium for the policy. Traditionally this computation is performed using risk tables but independent of the specific historical data of the company's customers. Here we use historical data of the provider's customers to make the prediction. This is more appropriate since the parameters used in the risk tables may have been developed based on a different customer base (country) and so unsuitable for the one under consideration.

\subsection{New versus Renewal Policies}

Note that we need to distinguish between a new policy, for which only customer provided data is available, and renewals, for which information about the customer since the start of their policy is available. We develop a model that can be applied to both new and renewal policies. In the case of renewal policies, the historical data of the policy is included in the training set. The proposed approach therefore automatically includes the past claim information of the policy (since it is now included in the training set). Therefore we assume that all policies that are at least one year old are included in the training set used for parameter determination. In this way recent information is included in the predictions. Note that this means there is no need for an accident penalty or a no-claim discount since these adjustments are implicit.

\subsection{Quantity versus Currency of Data}

The more data that is used for predictions, the more accurate the prediction. However, as we increase the dataset by going further back in time we will be using outdated information (e.g., automobile models, cost of repairs, etc.). We manage this as follows. As the cost of claims increases (with time), the claim rate of a policy will also increase. The prediction we get from using outdated information will therefore be lower than what would actually occur. We therefore scale predictions as follows. We predict the total claims for the previous year and we then use a scaling factor to ensure that the total predicted claims equals the total actual claims. This scaling factor is then included when making new predictions. This scale factor computation is repeated every year so that the total predicted claims for the upcoming year will be close to the actual total claims for the year. 

\subsection{Comprehensive versus Third-Party Policies}

There are two types of policies, Comprehensive, in which the company has to pay for repairs to the customer's car even if they were at fault, and Third-Party, in which the company only pays for repairs to the other involved party in the accident (i.e., the third party). Note that the risk behaviour (and claims requests) of Third-Party versus Comprehensive policy customers may be different but the approach we use has the ability to extract the relevant information. We therefore make predictions using the combined dataset (i.e. policies of both types) but include the type of policy as a feature.
Note that the features for both types of policies are the same except that, for Comprehensive policies, there is also the Sum Insured (based on the value of car) feature. This value is set to zero for Third Party policies but the same model can be used for both policy types.

\subsection{Multi-Car versus Single-Car Policies}

For each policy we must predict the total annual claims for the policy which may have multiple drivers and/or cars. Note that a premium is charged per car and the sum of these forms the policy premium. Our model uses the primary car and primary driver of that car as the sample for that policy (and ignores all other drivers/cars). This means that the prediction is made for a single driver/car pair and this can be repeated for each car on the policy to determine the total claim rate for the policy. 

\section{Dataset Description and Preparation}

The policy data used for this study spans a period of 5 years. No confidential information is disclosed and all monetary values are normalized. It consists of data collected from past and existing customers. Each policy record consists of policy information, information for each driver on the policy, information for each vehicle on the policy  and information on each claim made on the policy since its inception. Some of this information is not relevant for our purposes (e.g., Vehicle Identification Number) and is ignored. Certain features must be derived from the information provided. For example, the policy lifetime is computed as the difference between the termination date and start date (if terminated) or the difference between the present date and the start date (if currently active).
Note that the metric of concern is the average claim rate for a driver/car pair. For each policy we determine the total value of all claims made (by the primary driver) and divide by the total lifetime of the policy (in years) to obtain the claim rate. 

Our objective is to predict the claim rate and use this claim rate to determine a suitable price. In order to do this we focus only on the primary driver and their associated car for each policy. This happens to be the majority of cases so we do not lose too much information. For this driver we compute the claim rate based on accidents in which they were involved. We remove features that were mostly empty or corrupt and also placed filters to remove anomalous data such as drivers over the age of 85. The data that was finally used for the problem is provided in Table \ref{claims}. POL is the policy number which is used as a unique identifier for the policy. CLR is the average claims per year computed for the primary driver and their associated car for the policy. TOC is the type of policy (customer) which we also use as a feature. SIV is the sum insured value of the primary vehicle of the policy and this value is zero for Third Party policies. All other features are described in the Table.

\begin{table}
\caption{Policy Features used for Analysis}
\label{claims}
\centering
\setlength{\tabcolsep}{10pt}
\renewcommand{\arraystretch}{1.5}
{\small
\begin{tabular}{|c|l|} \hline
{\bf Feature} & {\bf Description} \\ \hline
POL & policy identification number	\\ \hline
CLR & annual claim rate (total claims divided by policy lifetime)	\\ \hline
ADR & city of home address\\ \hline
COV	& were drivers continuously insured over the last 5 years? (y/n)	\\ \hline
SEX	& gender of driver\\ \hline
AGE & age of driver \\ \hline
MST & marital status of driver\\ \hline
USE & type of use (business, work or pleasure) \\ \hline
WRK & whether car is used for work (y/n) \\ \hline
NAF & the number of at-fault accidents over the last 5 years \\ \hline
DAF & number of years primary driver has been free of Claims \\ \hline
NNF & the number of not-at-fault accidents over last 5 years of driver\\ \hline
MAK & car manufacturer \\ \hline
VYR & model year \\ \hline
BDY & body type \\ \hline
YCF & the number of years the Primary Car has been claim free \\ \hline
NCC & engine size of car (in CC)\\ \hline
TOC & type of policy (Comprehensive or Third Party) \\ \hline
SIV & sum insured value\\ \hline
\end{tabular}
}
\end{table}

\section{Proposed Model}

The model we propose is unique in that (a) the metric of concern is claim rate and (b) we use a novel solution approach rather than the traditional approaches. We do not present a full comparison with other Machine Learning approaches in this paper since our intent is to introduce the model. Future papers will include detailed comparisons with state of the art Machine Learning algorithms.

\subsection{Definition of Distance Metric}

In this section we describe the approach used for predicting the annual financial claims per year (henceforth called claim rate) for a given policy. We denote the set of features that we consider by the set $\vec{F}$. Features include information such as age, gender, etc., as well as information about their associated vehicle such as model, body type, etc.
We denote the set of samples by $\vec{S}$ where a sample is a policy  and includes features for the associated driver/car pair.
One way to predict the claim rate is to find the expected value of the claim rates of all existing policies with identical features. However, there may be none or very few of such policies. We must therefore include policies with features that are nearby and include them in the average.

In order to find ``close" policies we need to define a distance metric between pairs of categories of a given feature and then use some measure (e.g., Euclidean Distance) to define the distance between two policies. We define this distance as follows. For each category $v$ of feature $f$ let $C(f,v)$ denote the claim rate averaged over all policies that has a value $v$ for feature $f$. For example, for the feature gender ($f=gender$) with members $m$ and $f$, let $C(gender,male)$ denote the average claim rate over all male drivers and let $C(gender,female)$ denote the average claim rate over all female drivers. We define the distance between these two categories of this feature by $|C(gender,male) - C(gender,female)|$. In general, if we had several feature categories then the distance between any two of them will be computed in this manner. Therefore if the test policy has a male driver then their gender distance from another policy with a male driver is 0 while for a female driver it would be $|C(gender,male) - C(gender,female)|$. Note that the same computation is done for numerical features such as age. For example, the distance between a 48 year old and a 30 year old is given by $|C(age,48) - C(age,30)|$. By doing this we maintain the same measurement unit (claim rate) for all distances. If the 48 year old is a male and the 30 year old is a female then the Euclidean distance is used (i.e. the root of the sum of the squares of the gender and age feature distances). 

\subsection{Claim Rate Prediction}

If there were several existing policies with the exact feature values as the test policy then one could obtain a good estimate on the claim rate for the test policy by taking the average of claim rates over all policies with the same features. However, in general there may not be sufficient samples (or none) to obtain an estimate with sufficient confidence and so we need to include nearby samples as well. The more nearby samples we use the more robust the estimate but the less personalized since included samples are further away. This in turn leads to lower prediction accuracy. We take a weighted average of claims of all policies where the weight is inversely proportional to the Euclidean distance between the policies.

Suppose we wish to predict the claim rate for some test policy and denote the distance between this policy and some training policy $s$ by $d_s$.
We use a weight $(1 + d_s)^{-\kappa}$ for $\kappa \ge 0$ when taking into account the claim rate of sample $s \in \vec{S}$. However we need to have a normalizing factor $\alpha$. The predicted claim rate $c$ for the test sample is therefore given by
\begin{equation}
    c(\kappa) \equiv \sum_{s \in \vec{S}} \alpha \frac{c_s}{(1 + d_s)^{\kappa}}
\end{equation}
where $c_s$ is the claim rate of policy $s$.
If all policies had the same claim rate then the predicted claim rate should also have this value and hence we must have
\begin{equation}
    c \equiv \sum_{s \in \vec{S}} \alpha \frac{c}{(1 + d_s)^{\kappa}}
\end{equation}
and hence
\begin{equation}
    \alpha = \left( \sum_{s \in \vec{S}} \frac{1}{(1 + d_s)^{\kappa}} \right)^{-1}
\end{equation}
and so we have the predicted claim rate for the test policy as
\begin{equation}
    c(\kappa) \equiv \frac{\sum_{s \in \vec{S}} \frac{c_s}{(1 + d_s)^{\kappa}}}{\sum_{s \in \vec{S}} \frac{1}{(1 + d_s)^{\kappa}}}
\end{equation}
The pseudo-code for this computation is provided in Algorithm \ref{alg}.

\begin{algorithm}
\caption{Pseudo-code for proposed Algorithm to predict test sample claim rate $c(\kappa)$}
\label{alg}
\setstretch{1.6}
\begin{algorithmic}[1]
\State $\vec{F} \equiv \text{\sf set of features}$
\State $\vec{S} \equiv \text{\sf set of training samples}$
\State $\vec{v}_f \equiv \text{\sf set of categories for feature $f \in \vec{F}$}$
\State $\kappa > 0$ \text{tuning parameter}
\State $X_{sf} \in \vec{v}_f \equiv \text{\sf category of feature $f \in \vec{F}$ of training sample $s \in \vec{S}$}$
\State $x_f \in \vec{v}_f \equiv \text{\sf category of feature $f \in \vec{F}$ for test sample}$
\State $c_s \equiv \text{claim rate for sample $s \in \vec{S}$}$
\State $c(\kappa) \equiv \text{predicted claim rate for test sample using parameter $\kappa$}$
\State $\displaystyle \bar{c} \leftarrow \frac{1}{|\vec{S}|} \sum_{s \in \vec{S}} c_s \;\;\; \text{(average claim rate over all training samples)}$
\ForEach {$f \in \vec{F} $}
\ForEach {$v \in \vec{v}_f $}
\State $\vec{z} \equiv \{s \in \vec{S}\;|\;X_{sf} = v\}$
\State $\displaystyle C(f, v) \leftarrow \frac{1}{|\vec{z}|} \sum_{s \in \vec{z}} c_s \;\;\; \text{(average claim rate over samples where feature $f$ has value $v$)}$
\EndFor
\EndFor
\ForEach {$s \in \vec{S}$}
\State $\displaystyle d_s \leftarrow  \left( \sum_{f \in \vec{F}} \left( C(f,x_f) - C(f,X_{sf}) \right)^2 \right) ^{\frac{1}{2}}$
\State $d_s \leftarrow \frac{d_s}{\bar{c}}$
\EndFor
\State $\displaystyle c(\kappa) = \frac{\sum_{s \in \vec{S}} \frac{c_s}{(1 + d_s)^{\kappa}}}{\sum_{s \in \vec{S}} \frac{1}{(1 + d_s)^{\kappa}}}$ \;\;\; \text{(predicted claim rate for test policy)}
\end{algorithmic}
\end{algorithm}

\subsection{Computing the Optimal value of \texorpdfstring{$\kappa$}{}}

Next we determine the optimal value of $\kappa$. For an existing policy $s$ we have the actual claim rate $c_s$. Note that we can predict a claim rate for this sample (in which case the sample must be removed from the training set) and we denote this predicted value by $\hat{c}(\kappa)$. We introduce the hat to distinguish this predicted value with the actual value (which has no hat). Note that we use 5-Fold cross validation and hence 80\% of the samples are used for training (computing the average claim rates $C(f,v)$) while the other 20\% are used for testing (and determination of the accuracy).
Note that when $\kappa=0$ then $\hat{c}(\kappa) = \bar{c}$ and so the prediction is simply the average over all (training) samples. As $\kappa$ is increased, close samples are weighted more heavily but the average becomes less robust and hence the error will eventually start increasing again. Therefore the optimal $\kappa$ lies somewhere in between (see Figure \ref{kap} for an example of this relationship). We therefore will find $\kappa$ that minimizes the Mean Absolute Error (MAE) of the prediction. For convenience we will normalize this by the MAE if one used the average claim rate over all policies, $\bar{c}$, as the predictor. One can think of this case as making a prediction without features. Therefore we will compare the error of the prediction made with features with the error of the prediction made without features. Let us denote the test set by $\vec{T}$ then we compute the normalized error over the test samples as
\begin{equation}
    E(\kappa) = \frac{\sum_{t \in \vec{T}} |\hat{c}_t(\kappa) - c_t|}{\sum_{t \in \vec{T}} |\bar{c} - c_t|}
\end{equation}
If the predictor is the same as averaging over all policies (i.e., $\hat{c}_t(\kappa) = \bar{c}$) then this ratio is 1. However if, by adding features, the MAE of the predictor is decreased then this ratio drops below 1.
Therefore this metric provides an indication of prediction performance using features when compared to prediction performance without using features and hence demonstrates the benefit of the feature-based approach. We then find the $\kappa$ value that optimizes the predictor as
\begin{equation}
    \kappa^* = \argmin_{\kappa} E(\kappa)
\end{equation}
This value is then used to obtain the optimal prediction as $c_t^* = \hat{c}_t(\kappa^*)$.

\subsection{Feature Importance}

Consider a single feature. We know from the previous section that, as $\kappa$ is increased then $E(\kappa)$ should initially decrease before increasing once again. If this does not occur then the feature does not capture sufficient information to be useful for predictions. One can therefore use the value of $E(\kappa)$ evaluated at the optimal $\kappa$ for that feature alone as an indication of importance. In fact, even if we used a fixed value of $\kappa$ for each feature the corresponding value of $E(\kappa)$ is an indication of relative importance with lower values indicating more importance. For example, in Figure \ref{imp} we plot $E(\kappa)$ as a function of $\kappa$ for two features DAF (years claim free for the driver) and YCF (years claim free for the car). For DAF the minimum error occurs at $\kappa = 8$ while for YCF it occurs at $\kappa >20$. However, at $\kappa=10$ we find that the respective values provide a good representation of the optimal value and hence can be used to compare the two features. Also note that here we clearly see that risk depends primarily on the driver with the car playing a minor role.

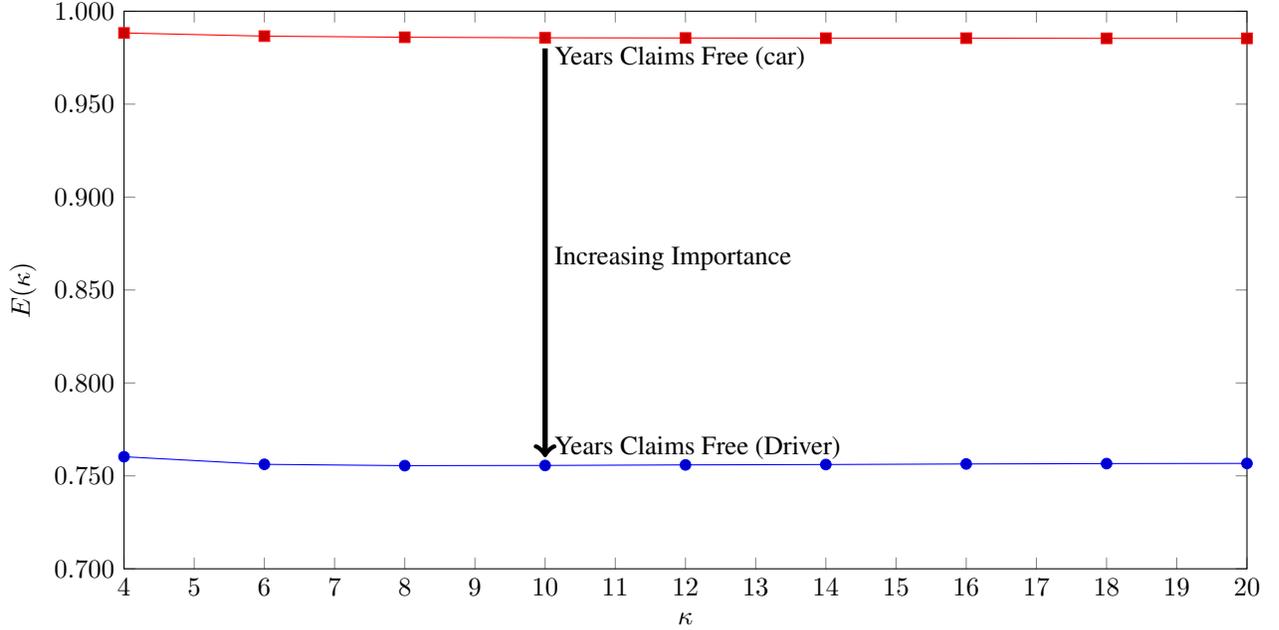
\begin{figure}
\begin{tikzpicture}
\begin{axis}[width=\textwidth, height=9cm, xlabel=$\kappa$, mark=none, ylabel=$E(\kappa)$, xmin=4, xmax=20, ymin=0.7, ymax=1,
y tick label style={
        /pgf/number format/.cd,
            fixed,
            fixed zerofill,
            precision=3,
        /tikz/.cd
    },
]
\addplot coordinates {
(4, 0.7603) (6, 0.7562) (8, 0.7555) (10, 0.7556) (12, 0.7559) (14, 0.7561) (16, 0.7564) (18, 0.7566) (20, 0.7567)};
\addplot coordinates {
(4, 0.9883) (6, 0.9866) (8, 0.9860) (10, 0.9857) (12, 0.9856) (14,  0.9855) (16, 0.9855) (18, 0.9854) (20, 0.9854)};
\draw[->, line width=2pt] (axis cs:10, 0.98) -- (axis cs:10, 0.76);
\node[right] at (10, 0.975) {Years Claims Free (car)};
\node[right] at (10,0.867) {Increasing Importance};
\node[right] at (10,0.765) {Years Claims Free (Driver)};
\end{axis}
\end{tikzpicture}
\caption{$E(\kappa)$ of two Features to demonstrate Relative Importance}
\label{imp}
\end{figure}

We therefore use this approach to determine which features are important and hence should be included in the analysis. We use a value of $\kappa=10$ and, using a single feature at a time, we compute $E(10)$. The resulting values are provided in Figure \ref{infogain}. The features represented in red have normalized errors greater than 1. 

\begin{figure}
\centering
\begin{tikzpicture}
 \begin{axis}[
 axis on top,
  ticklabel style = {font=\scriptsize},
  xbar, xmin=0,
  bar width= 0.1cm, y dir=reverse,
  width=\textwidth, height= 8cm, enlarge y limits=0.01,
  xlabel={$E(10)$},
  y=0.6cm,
  symbolic y coords={a, DAF, NAF, NNF, TOC, YCF, VYR, BDY, COV, NCC, MAK, USE, MST, WRK, ADR, SEX, AGE, SIV,  DAF+NAF, DAF+NAF+NNF, DAF+NAF+NNF+TOC, DAF+NAF+NNF+TOC+YCF, DAF+NAF+NNF+TOC+YCF+VYR, DAF+NAF+NNF+TOC+YCF+BDY, DAF+NAF+NNF+TOC+YCF+COV, DAF+NAF+NNF+TOC+YCF+COV+NCC, DAF+NAF+NNF+TOC+YCF+COV+MAK, DAF+NAF+NNF+TOC+YCF+COV+USE, DAF+NAF+NNF+TOC+YCF+COV+USE+MST, DAF+NAF+NNF+TOC+YCF+COV+USE+WRK, DAF+NAF+NNF+TOC+YCF+COV+USE+ADR, DAF+NAF+NNF+TOC+YCF+COV+USE+SEX, DAF+NAF+NNF+TOC+YCF+COV+USE+SEX+AGE, DAF+NAF+NNF+TOC+YCF+COV+USE+SEX+SIV, b},
  ytick={DAF, NAF, NNF, TOC, YCF, VYR, BDY, COV, NCC, MAK, USE, MST, WRK, ADR, SEX, AGE, SIV, DAF+NAF, DAF+NAF+NNF, DAF+NAF+NNF+TOC, DAF+NAF+NNF+TOC+YCF, DAF+NAF+NNF+TOC+YCF+VYR, DAF+NAF+NNF+TOC+YCF+BDY, DAF+NAF+NNF+TOC+YCF+COV,  DAF+NAF+NNF+TOC+YCF+COV+NCC, DAF+NAF+NNF+TOC+YCF+COV+MAK, DAF+NAF+NNF+TOC+YCF+COV+USE, DAF+NAF+NNF+TOC+YCF+COV+USE+MST, DAF+NAF+NNF+TOC+YCF+COV+USE+WRK, DAF+NAF+NNF+TOC+YCF+COV+USE+ADR, DAF+NAF+NNF+TOC+YCF+COV+USE+SEX, DAF+NAF+NNF+TOC+YCF+COV+USE+SEX+AGE, DAF+NAF+NNF+TOC+YCF+COV+USE+SEX+SIV}, ytick style={draw=none},
    y tick label style={anchor=west,color=black,xshift= \pgfkeysvalueof{/pgfplots/major tick length}, yshift=3mm},
  bar shift=0pt,
 ]
 0.97927
\addplot coordinates {(0,a)};
\addplot [color=blue, fill] coordinates {(0.7556,DAF)};
\addplot [color=blue, fill] coordinates {(0.7987,NAF)};
\addplot [color=blue, fill] coordinates  {(0.9708,NNF)};
\addplot [color=blue, fill] coordinates {(0.9793,TOC)};
\addplot [color=blue, fill] coordinates {(0.9848,YCF)};
\addplot [color=blue, fill] coordinates {(0.9950,VYR)};
\addplot [color=blue, fill] coordinates {(0.9974,BDY)};
\addplot [color=blue, fill] coordinates {(0.9985,COV)};
\addplot [color=blue, fill] coordinates {(0.9986,NCC)};
\addplot [color=blue, fill] coordinates {(0.9989,MAK)};
\addplot [color=blue, fill] coordinates {(0.9989,USE)};
\addplot [color=blue, fill] coordinates {(0.9991,MST)};
\addplot [color=blue, fill] coordinates {(0.9994,WRK)};
\addplot [color=red, fill] coordinates {(1.000,ADR)};
\addplot [color=red,fill] coordinates {(1.000,SEX)};
\addplot [color=red,fill] coordinates {(1.001,AGE)};
\addplot [color=red,fill] coordinates {(1.009,SIV)};
\addplot [color=blue,fill] coordinates {(0.7549,DAF+NAF)};
\addplot [color=blue,fill] coordinates {(0.6840,DAF+NAF+NNF)};
\addplot [color=blue,fill] coordinates {(0.6654,DAF+NAF+NNF+TOC)};
\addplot [color=blue,fill] coordinates {(0.6403,DAF+NAF+NNF+TOC+YCF)};
\addplot [color=brown,fill] coordinates {(0.6464,DAF+NAF+NNF+TOC+YCF+VYR)};
\addplot [color=brown,fill] coordinates {(0.6443,DAF+NAF+NNF+TOC+YCF+BDY)};
\addplot [color=blue,fill] coordinates {(0.6375,DAF+NAF+NNF+TOC+YCF+COV)};
\addplot [color=brown,fill] coordinates {(0.6428,DAF+NAF+NNF+TOC+YCF+COV+NCC)};
\addplot [color=brown,fill] coordinates {(0.6391,DAF+NAF+NNF+TOC+YCF+COV+MAK)};
\addplot [color=blue,fill] coordinates {(0.6366,DAF+NAF+NNF+TOC+YCF+COV+USE)};
\addplot [color=brown,fill] coordinates {(0.6404,DAF+NAF+NNF+TOC+YCF+COV+USE+MST)};
\addplot [color=brown,fill] coordinates {(0.6380,DAF+NAF+NNF+TOC+YCF+COV+USE+WRK)};
\addplot [color=brown,fill] coordinates {(0.6898,DAF+NAF+NNF+TOC+YCF+COV+USE+ADR)};
\addplot [color=blue,fill] coordinates {(0.6356,DAF+NAF+NNF+TOC+YCF+COV+USE+SEX)};
\addplot [color=brown,fill] coordinates {(0.6444,DAF+NAF+NNF+TOC+YCF+COV+USE+SEX+AGE)};
\addplot [color=brown,fill] coordinates {(0.6958,DAF+NAF+NNF+TOC+YCF+COV+USE+SEX+SIV)};
\addplot [color=pink,fill] coordinates {(0,b)};
\draw [dashed] (axis cs:1,b) -- (axis cs:1,a);
\end{axis}
\end{tikzpicture}
\caption{Normalized MAE for proposed predictor computed for each feature}
\label{infogain}
\end{figure}
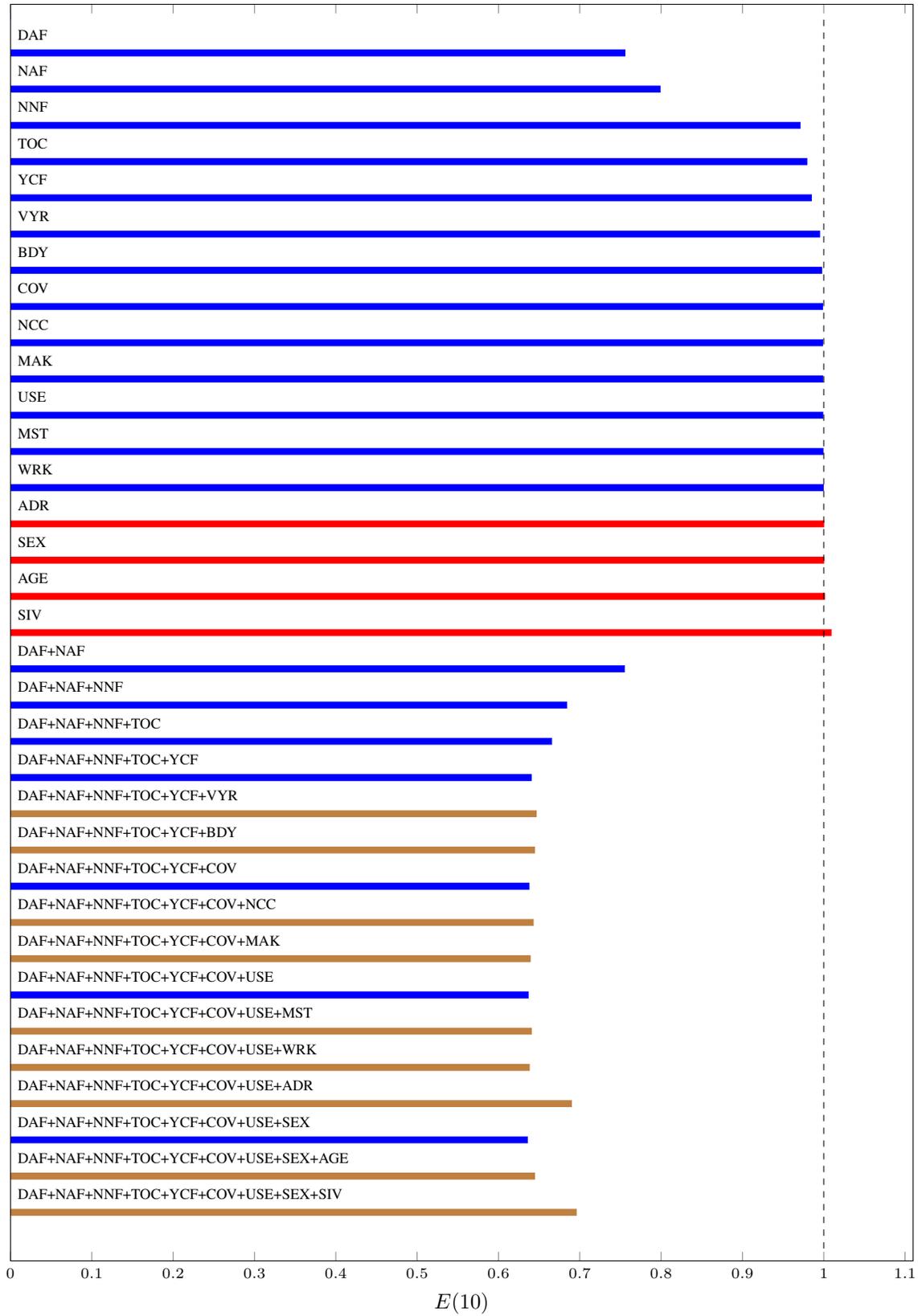

\subsection{Feature Selection}

Now that we know which features are important we focus on which of them should be included in the model. We do this as follows. Starting from the most important feature (lowest value for $E(10)$) we add one feature at a time and again compute $E(10)$ for the combination. If the performance metric decreases (i.e., better results) then we keep it and repeat. If the performance metric increases then we remove the recently added feature and repeat. Note that, some features may have low importance when considered in isolation but together with other features (such as type of policy) their value increases. The results of this process are provided in the lower part of Figure \ref{infogain}. A brown bar indicates that the addition of a feature resulted in a loss of performance and hence the feature should be removed.

The final features to be used include number of years since the driver was last in an accident, the number of at-fault accidents by the driver over the last five years and the number of not-at-fault accidents over the last five years. Each of these is a strong indicator of risk. In the case of renewals we would have actual claim rate values but for new customers these three features (even without financial information) correlates well with claim rate. The type of policy feature is also needed since it helps to distinguish the two types of policies. Note that we could do the analysis separately for each type of policy but the increase in sample size by combining the two types provides better overall results. Only one car feature was found to be sufficiently beneficial and that was the number of years since the last claim was made on the car. However this feature is far less important than the driver features that were included indicating that what really matters is the driver on the policy and not the car. Whether the driver was continually insured over the last 5 years (i.e., mature driver), the type of use (personal versus business) as well as the gender of the driver were also found to be useful (but far less so than the others). 

Note that we had expected certain features (like age) to be beneficial but they were not. In Figure \ref{age} we provide a histogram of the average claim rate by age (in blue). We see that there is a weak dependency on age but because of the large variations from year to year (because of limited data), the dependency is not sufficiently robust. Next we predicted the claim rate for each age using the approach described previously. We found that the optimal value of $\kappa$ was 2 with a normalized MAE of $E(2) = 0.9996$ which indicates that limited personalization was possible. We then used this value to find optimal claim rate values for each age. In Figure \ref{age} we provide the histogram of the original claim rates (blue) and the filtered claim rates (red). We note that the red claim rates are each close to unity and hence provides little differentiation. This is why this feature does not provide much benefit for predictions.

\begin{figure}
\begin{tikzpicture}
\begin{axis} [
ybar, /pgf/bar width=1pt,
ticklabel style = {font=\tiny}, ymin=0, xmin=17, xmax=85,
width=\textwidth, height=8cm,
]
\addplot table {age.data};
\addplot table {age-filter.data};
\end{axis}
\end{tikzpicture}
\label{age}
\caption{Histogram of Claim Rate versus Age for Original (blue) and filtered (red) Cases}
\end{figure}
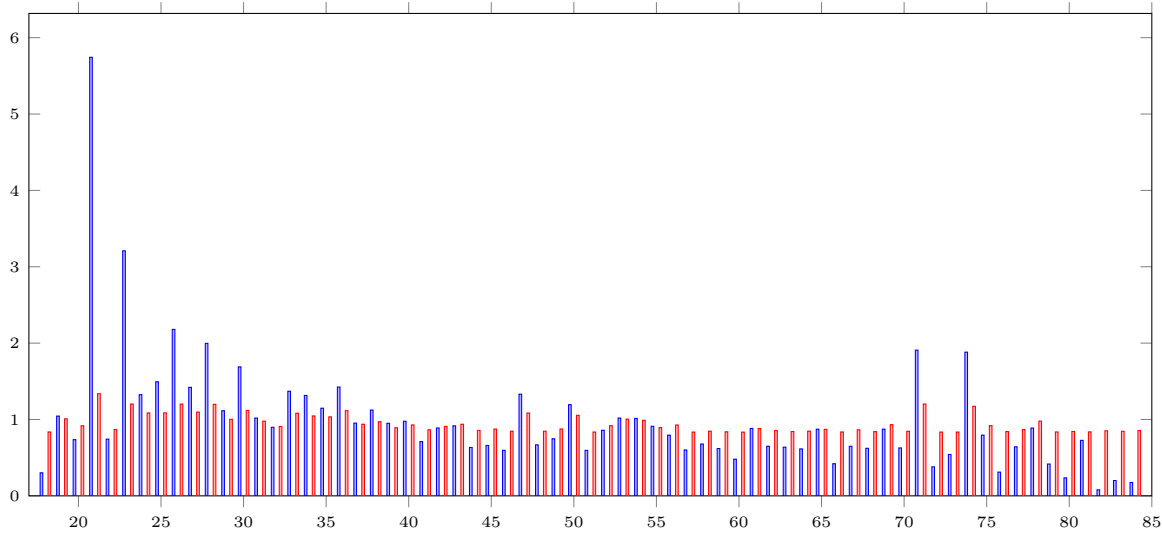

\subsection{Parameter Optimization}

We now have the set of features to be included in the model. Next we find the optimal value of $\kappa$ for this combination of features. This value will then be used for making predictions. In Figure \ref{kap} (brown curve) we plot $E(\kappa)$ as a function of $\kappa$. We find the optimal value to be $\kappa^* = 8$ with $E(8) = 0.63$ and hence once can reduce the MAE obtained with no features by 37\% by using the using the 8 chosen features. We also note that, although $E$ increases with $\kappa$ beyond the optimal point, the increase is very gradual so the error remains nearly constant for a wide range of values and so the approach is robust with respect to $\kappa$.

We believe that if we had performed feature selection using each policy type separately that we would get the same features. We therefore used these features and determined $E(\kappa)$ for Third-Party policies only and also for Comprehensive policies only. These are also plotted in Figure \ref{kap}. We find that the accuracy for Third-Party only samples is close to that of the case of using both Third-Party and Comprehensive samples. This is primarily due to the fact that there are 50\% more Third-Party samples than Comprehensive samples. Therefore the Third-Party samples are more useful to the Comprehensive predictions than the other way around. Also note that all three cases are optimal at $\kappa=8$.

\begin{figure}
\begin{tikzpicture}
\begin{axis}[width=\textwidth, height=9cm, xlabel=$\kappa$, mark=none, ylabel=$E(\kappa)$, xmin=5, xmax=11, ymin=0.630, ymax=0.670,
y tick label style={
        /pgf/number format/.cd,
            fixed,
            fixed zerofill,
            precision=3,
        /tikz/.cd
    },
]
\addplot coordinates {
(5, 0.670) (6, 0.66135) (7, 0.65733) (8, 0.65629) (9, 0.65700) (10, 0.65868) (11, 0.66076)};
\addplot coordinates {
(5, 0.64307) (6, 0.63960) (7, 0.63822) (8, 0.63785) (9, 0.63804) (10, 0.63859) (11, 0.63928)};
\addplot coordinates {
(5, 0.65054) (6, 0.64097) (7, 0.63655) (8, 0.63498) (9, 0.63505) (10, 0.63560) (11, 0.63634)};
\legend{Comprehensive, Third-Party, Comprehensive and Third-Party}
\end{axis}
\end{tikzpicture}
\caption{$E(\kappa)$ as a function of $\kappa$ for Selected Features}
\label{kap}
\end{figure}
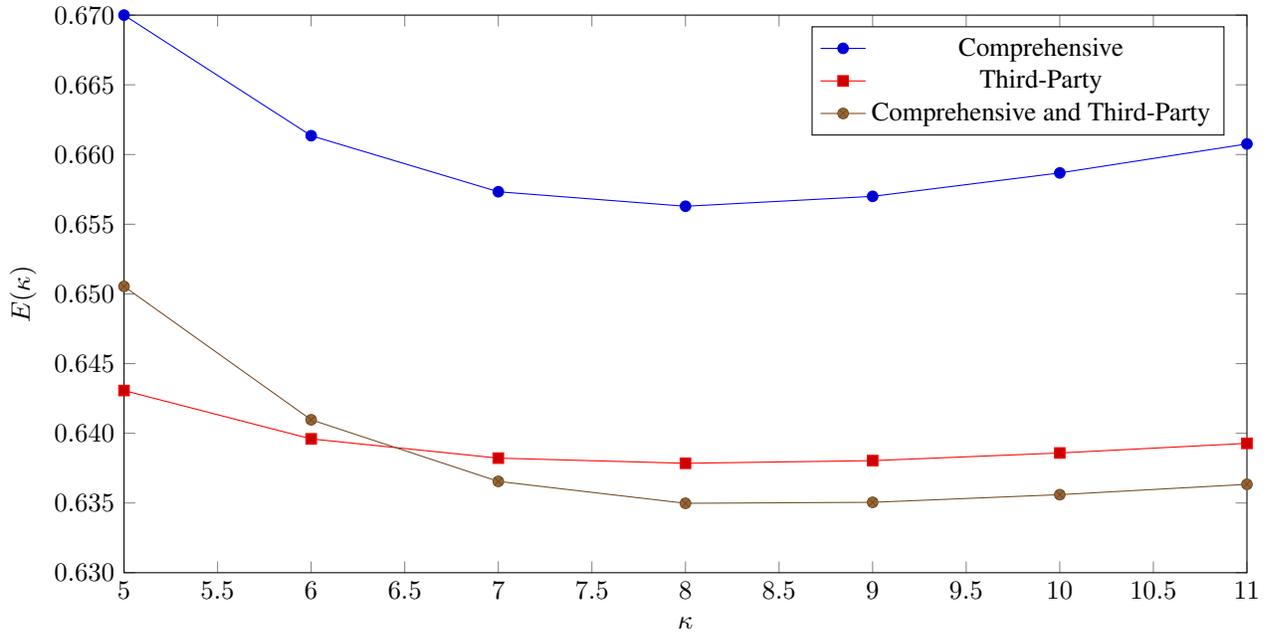

\section{Illustrative Examples of Predictions}

We have now determined the features to be used and the optimal value of $\kappa$. In this section we will consider various policy scenarios and predict the resulting claim rate to demonstrate the dependence on the features. Although we use both Comprehensive and Third Party policies in our model we will illustrate using a Comprehensive policy and normalize claim rates with respect to the average over all Comprehensive Policies. In Table \ref{cases} we provide various scenarios to illustrate that the model provide reasonable outputs. The top table starts with a low risk policy and features are changed one at a time that results in increased claim rates. The bottom table starts with a high risk policy and features are adjusted one at a time in order to lower the claim rate.

There is one outstanding case that provided unexpected results. In the lower table when we reduce the number of not-at-fault accidents from 1 to zero we expect a decrease in the claim rate but instead we found that it increased. We investigated this in detail. We found that the provided data has some inconsistencies. There were many cases where the number of not-at-fault accidents and at-fault accidents were 0 but the driver indicated that they made a claim within the last five years. This of course is inconsistent. This lead to a lot of claims listed under NNF=0 when they should be listed under NNF=1 or above. We believe this to be the reason for the result obtained. Our intent was not to make any adjustments to the given data for this paper to avoid any appearance of data tweaking. However, in the future, we will investigate what happens when we adjust the data to make it more consistent while justifying any changes made.

\begin{table}
\centering
\caption{Predictions for Sample Cases starting with Low Risk Case (top) and High Risk Case (bottom)}
\label{cases}
\setlength{\tabcolsep}{10pt}
\renewcommand{\arraystretch}{1.5}
{\small 
\begin{tabular}{|c|c|c|c|c|c|c|c|c|} 
\hline
{\bf DAF} & {\bf NAF} & {\bf NNF} & {\bf TOC} & {\bf YCF} & {\bf COV} & {\bf USE} & {\bf SEX} & {\bf Claim Rate} \\ \hline
15 & 0 & 0 &  CM & 15 & Y & Business & M & 0.07 \\ \hline
6  & 0 & 0 &  CM & 15 & Y & Business & M & 0.63 \\ \hline
2  & 1 & 0 &  CM & 15 & Y & Business & M & 0.93  \\ \hline
15 & 0 & 1 &  CM & 15 & Y & Business & M & 0.68  \\ \hline
15 & 0 & 0 &  TP & 15 & Y & Business & M & 0.03  \\ \hline
15 & 0 & 0 &  CM & 0  & Y & Business & M & 0.20 \\ \hline
15 & 0 & 0 &  CM & 15 & N & Business & M & 0.07 \\ \hline
15 & 0 & 0 &  CM & 15 & Y & Private & M & 0.06  \\ \hline
15 & 0 & 0 &  CM & 15 & Y & Business & F & 0.07 \\ \hline
\end{tabular}
\\[12pt]
\begin{tabular}{|c|c|c|c|c|c|c|c|c|} 
\hline
{\bf DAF} & {\bf NAF} & {\bf NNF} & {\bf TOC} & {\bf YCF} & {\bf COV} & {\bf USE} & {\bf SEX} & {\bf Claim Rate} \\ \hline
1 & 1 & 1 &  CM & 1 & N & Private & F & 6.04  \\ \hline
5 & 1 & 1 &  CM & 1 & N & Private & F & 1.53  \\ \hline
6 & 0 & 1 &  CM & 1 & N & Private & F & 0.56  \\ \hline
1 & 1 & 0 &  CM & 1 & N & Private & F & 7.59  \\ \hline
1 & 1 & 1 &  TP & 1 & N & Private & F & 4.04  \\ \hline
1 & 1 & 1 &  CM & 15 & N & Private & F & 5.55  \\ \hline
1 & 1 & 1 &  CM & 1 & Y & Private & F & 5.97  \\ \hline
1 & 1 & 1 &  CM & 1 & N & Business & F & 6.00  \\ \hline
1 & 1 & 1 &  CM & 1 & N & Private & M & 6.04  \\ \hline
\end{tabular}
}
\end{table}

\section{Interpreting Prediction}

Once a predicted claim rate is computed then this information can be used to compute a premium. The premium will take into account the operational costs of the company as well as the desired profit margin. This is another interesting area of research but is outside the scope of this paper.
Once a premium is computed it is important to explain the reason for the amount (i.e, interpretability). The operational cost and profit is independent of the customer so the only customer dependent factor is the predicted claim rate. We can determine the influence of each feature on this claim rate and this information can be used to explain the decision made. We do this as follows. Consider any feature $f$ and let $v$ represent the category value of this feature for the new policy. We can use the model, with only feature $f$, to determine the predicted claim rate for anyone in category $v$. Let us denote this predicted claim rate of this feature by $\tilde{C}(f, v)$. Note that this is not the same as the average claim rate over all training samples with category value $v$ which we previously denoted by $C(f, v)$. Let us explain with the feature gender. If $\kappa=0$ then $\tilde{C}({\sf gender, male}) = \tilde{C}({\sf gender, female}) = \bar{c}$. However as $\kappa$ goes to infinity then $\tilde{C}({\sf gender, male})$ approaches $C({\sf gender, male})$ and $\tilde{C}({\sf gender, female})$ approaches $C({\sf gender, female})$. For positive values of $\kappa$, $\tilde{C}(f, v)$ will lie between $\bar{c}$ and $C(f, v)$.

The metric $I_f \equiv \tilde{C}(f, v)/\bar{c}$ will be used to represent the impact of feature $f$ (of a policy with value $v$ for the feature) where $\bar{c}$ is the average claim rate. For this exercise we only use Third-Party samples to better explain the approach. If $I_f < 1$ then the feature is causing a reduction of the claim rate otherwise it is causing an increase in the claim rate. Note that all values are being computed using $\kappa = \kappa^*$ and normalized with respect to the average claim rate for Third-Party policies.

For the policy we chose we have the following information. The driver got into an accident and made a claim 9 years ago ($I_{DAF} > 1$). They have no at-fault accidents over the last 5 years ($I_{NAF} < 1$). They have had 1 not-at-fault accidents over the last five years ($I_{NNF} > 1$). They have a Third-Party Policy (hence $I_{TOC} = 1$). Their car was last in an accident 18 years ago ($I_{YCF} < 1$). They have been continuously insured over the last five years ($I_{COV} < 1$). This is their private vehicle ($I_{USE} < 1$). The driver is Male ($I_{SEX} < 1$ but almost 1). The predicted claim rate (which was impacted by the various features) has $I_{CLR} = 1.0$. We provide this information visually in Figure \ref{cont}. Hence the provider can explain to the customer the specific reasons for the premium of their policy.

\begin{figure}
\caption{Contribution of each Feature to Prediction}
\label{cont}
\centering
\begin{tikzpicture}
\begin{axis} [
ybar,
ticklabel style = {font=\small},
width=\textwidth, height=8cm,
symbolic x coords={DAF, NAF, NNF, TOC, YCF, COV, USE, SEX, CLR}, 
xtick={DAF, NAF, NNF, TOC, YCF, COV, USE, SEX, CLR}, ytick style={draw=none},
bar shift=0pt,
]
\addplot coordinates {(DAF, 2.82)};
\addplot coordinates {(NAF, 0.24)};
\addplot coordinates {(NNF, 2.09)};
\addplot coordinates {(TOC, 1.00)};
\addplot coordinates {(YCF, 0.375)};
\addplot coordinates {(COV, 0.706)};
\addplot coordinates {(USE, 1.03)};
\addplot coordinates {(SEX, 1.00)};
\addplot coordinates {(CLR, 1.00)};

\coordinate (A) at (axis cs:DAF,1);
\coordinate (O1) at (rel axis cs:0,0);
\coordinate (O2) at (rel axis cs:1,0);
\draw [red,sharp plot] (A -| O1) -- (A -| O2);

\end{axis}
\end{tikzpicture}
\end{figure}
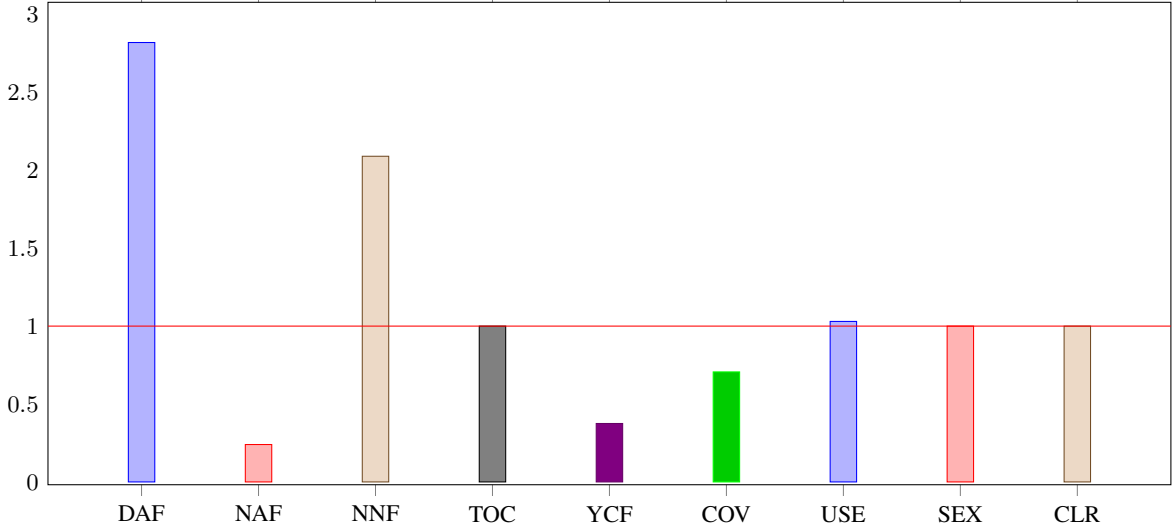

\section{Some Analytical Results}

\subsection{Expected value of Prediction}

Consider the predicted claim rate $\hat{c}$ for some test sample. If we assume that the training samples have an average claim rate $\bar{c}$ then we will show that the expected value of $\hat{c}$ is $\bar{c}$. This would mean that the sum of predicted claim rates approaches the sum of the actual claim rates as the number of test samples increases. This ensures that, at the end of the year, the total claims that are predicted is close to the total actual claims that were made.
\begin{lemma}
If the training samples have a mean claim rate of $\bar{c}$ then the expected value of the prediction for a test sample is equal to $\bar{c}$.
\end{lemma}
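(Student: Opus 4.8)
The plan is to recognise the predictor derived above as a normalized weighted average of the training claim rates and then exploit the fact that its weights sum to unity. First I would define the weight attached to sample $s$ as $w_s \equiv (1+d_s)^{-\kappa} / \sum_{s' \in \vec{S}} (1+d_{s'})^{-\kappa}$, so that the prediction becomes $\hat{c}(\kappa) = \sum_{s \in \vec{S}} w_s c_s$ and, crucially, $\sum_{s \in \vec{S}} w_s = 1$ by the very construction of the normalizing factor $\alpha$. This convexity is the structural feature on which the whole argument rests, and it is exactly what the derivation of $\alpha$ was designed to guarantee.

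Next I would take the expectation. Treating the distances $d_s$ (equivalently the weights $w_s$) as fixed, since they are determined by the feature geometry through the averaged quantities $C(f,v)$, and modelling each realized claim rate $c_s$ as a random variable with common mean $\bar{c}$, linearity of expectation gives
\begin{equation}
    E[\hat{c}(\kappa)] = \sum_{s \in \vec{S}} w_s\, E[c_s] = \bar{c} \sum_{s \in \vec{S}} w_s = \bar{c},
\end{equation}
the last equality using $\sum_{s \in \vec{S}} w_s = 1$. Equivalently one can pull the common mean straight out of the ratio, cancelling the numerator's weighted sum against the denominator; either route is a one-line calculation once the convex-combination structure is in place.

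The hard part will be conceptual rather than computational: the weights $w_s$ are not truly independent of the claim rates, because each distance $d_s$ is built from the averages $C(f,v)$, which are themselves functions of the $c_s$. I would therefore make explicit that the hypothesis ``the training samples have a mean claim rate of $\bar{c}$'' is to be read as $E[c_s \mid d_s] = \bar{c}$ for every $s$ (equivalently, conditioning on the realized feature geometry before averaging over claim-rate noise), which is what legitimises replacing $E[c_s]$ by $\bar{c}$ inside the weighted sum. Under this reading the result is an exact identity; without it one would obtain only an approximate statement. Finally, to tie the lemma back to its motivation, I would apply the identity to each test sample and sum to get $E\!\left[\sum_{t \in \vec{T}} \hat{c}_t(\kappa)\right] = |\vec{T}|\,\bar{c}$, so that by a law-of-large-numbers argument the aggregate predicted claims track the aggregate actual claims as the number of test samples grows.
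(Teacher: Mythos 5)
Your argument is essentially the paper's own proof: the paper likewise holds the distances $d_s$ fixed given the feature values, observes that the denominator is then a constant, and applies linearity of expectation with ${\cal E}[c_s] = \bar{c}$ --- which is exactly your convex-combination computation $\sum_s w_s {\cal E}[c_s] = \bar{c}$ written in ratio form. Your closing caveat is actually \emph{more} careful than the paper: the paper silently treats $d_s$ as nonrandom even though the distances are built from the averages $C(f,v)$, which depend on the realized $c_s$, and your proposal to read the hypothesis as ${\cal E}[c_s \mid d_s] = \bar{c}$ (conditioning on the feature geometry) is precisely the assumption needed to make the paper's step ${\cal E}[c_s] = \bar{c}$ legitimate inside the weighted sum.
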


\begin{proof}
Recall that the predicted value for a given $\kappa$ is given by
\[
 \hat{c}(\kappa) = \frac{\sum_{s \in \vec{S}} \frac{c_s}{(1 + d_s)^{\kappa}}}{\sum_{s \in \vec{S}} \frac{1}{(1 + d_s)^{\kappa}}}
\]
We need to take the expectation of the right hand side. Now note that the feature values for a particular set of training samples is fixed and only the claim rate varies. Also note that the feature values of the test sample is also fixed. This means that $d_s$ (for sample $s$) is fixed given the specific test and training samples. This means that the denominator is constant and so we have
\[
{\cal E}[\hat{c}(\kappa)] = \frac{\sum_{s \in \vec{S}} \frac{{\cal E}[c_s]}{(1 + d_s)^{\kappa}}}{\sum_{s \in \vec{S}} \frac{1}{(1 + d_s)^{\kappa}}}
\]
Using the fact that ${\cal E}[c_s] = \bar{c}$ we obtain ${\cal E}[\hat{c}(\kappa)] = \bar{c}$.
\end{proof}

\subsection{Limiting values of \texorpdfstring{$c(\kappa)$}{}}

We have seen that $\hat{c}(\kappa=0) = \bar{c}$. In this section we compute the limit of $\hat{c}(\kappa)$ as $\kappa$ tends to $\infty$. We then take the expected value of this limit.

\begin{lemma}
If the training samples have a mean claim rate of $\bar{c}$ then
\begin{equation}
    {\cal E}[\lim_{\kappa \to \infty}c(\kappa)] = \bar{c}
\end{equation}
\end{lemma}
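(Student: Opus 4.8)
The plan is to compute the limit first and take the expectation second, reusing the key observation from the previous lemma that the distances $d_s$ are fixed once the feature values of the test sample and of the training set are fixed; only the claim rates $c_s$ are treated as random. Under this convention the map $\kappa \mapsto c(\kappa)$ is, for each realization of the $c_s$, an ordinary deterministic function of $\kappa$, so I can analyze its limit by elementary means and invoke probability only at the very end.

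To evaluate $\lim_{\kappa\to\infty} c(\kappa)$, I would set $d_{\min} \equiv \min_{s\in\vec{S}} d_s$ and let $\vec{S}_0 \equiv \{s \in \vec{S} : d_s = d_{\min}\}$ be the (non-empty) set of nearest training samples. Factoring $(1+d_{\min})^{-\kappa}$ out of both the numerator and the denominator rewrites the predictor as
\[
c(\kappa) = \frac{\sum_{s \in \vec{S}} c_s \left(\frac{1+d_{\min}}{1+d_s}\right)^{\kappa}}{\sum_{s \in \vec{S}} \left(\frac{1+d_{\min}}{1+d_s}\right)^{\kappa}}.
\]
Each base $\frac{1+d_{\min}}{1+d_s}$ equals $1$ when $s \in \vec{S}_0$ and lies strictly in $(0,1)$ when $s \notin \vec{S}_0$, so raising it to the power $\kappa$ and letting $\kappa\to\infty$ drives the non-nearest terms to $0$ while the nearest terms remain $1$. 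Hence both sums collapse to sums over $\vec{S}_0$, giving
\[
\lim_{\kappa\to\infty} c(\kappa) = \frac{\sum_{s \in \vec{S}_0} c_s}{|\vec{S}_0|},
\]
the plain average claim rate over the nearest samples. This matches the intuition stated earlier that larger $\kappa$ corresponds to ever-stronger personalization toward the closest policies.

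Taking expectations is then immediate. Because $d_{\min}$, and therefore $\vec{S}_0$, are determined entirely by the fixed feature values, $\vec{S}_0$ is a deterministic index set and $|\vec{S}_0|$ is a fixed positive integer. Applying linearity of expectation to the limit and using ${\cal E}[c_s] = \bar{c}$ for every $s$ yields
\[
{\cal E}\!\left[\lim_{\kappa\to\infty} c(\kappa)\right] = \frac{\sum_{s \in \vec{S}_0} {\cal E}[c_s]}{|\vec{S}_0|} = \frac{|\vec{S}_0|\,\bar{c}}{|\vec{S}_0|} = \bar{c},
\]
which is the claimed identity.

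The main obstacle, I expect, is the careful handling of possible ties at the minimum distance and the interchange of limit and expectation. The tie case is handled automatically by defining $\vec{S}_0$ as the full set of minimizers rather than a single nearest sample, so no genericity assumption on the $d_s$ is needed. The interchange is the more delicate point: strictly speaking the $C(f,v)$ that enter each $d_s$ are themselves averages of training claim rates, so treating the $d_s$ as fixed is the same modeling simplification already adopted in the previous lemma. I would therefore state explicitly that I am adopting that same convention, under which $\vec{S}_0$ is non-random and the exchange of the two operations is trivial; flagging this keeps the present lemma consistent with its predecessor.
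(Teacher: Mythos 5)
Your proposal is correct and follows essentially the same route as the paper: both factor the nearest-sample weight $(1+d_{s'})^{\kappa}$ out of numerator and denominator, observe that all ratios with base strictly less than $1$ vanish as $\kappa \to \infty$, conclude the limit is the average claim rate over the nearest samples, and then apply ${\cal E}[c_s] = \bar{c}$. Your version is marginally tidier in that defining $\vec{S}_0$ as the full set of minimizers absorbs both the paper's separate $d_s = 0$ case and its informal remark about ties into a single argument, and you are more explicit than the paper about the convention that the $d_s$ are treated as fixed when the expectation is taken.
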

\begin{proof}
We first compute the limit of the predicted value, $\hat{c}(\kappa)$ as $\kappa$ tends to infinity. There are two cases to consider. If one or more training samples have identical feature values as the test sample (i.e., $d_s = 0$) then $\hat{c}_s$ is the average of these values. However, if none of the training samples have identical features then both the numerator and denominator of $\hat{c}(\kappa)$ tend to zero as $\kappa$ tends to infinity so we instead do the following.
Denote the training sample with the smallest distance from the test sample by $s'$.  Let us multiply top and bottom of the equation for $\hat{c}(\kappa)$ by $(1 + d_{s'})^\kappa$ to obtain
\begin{equation}
    \lim_{\kappa \to \infty}c(\kappa) = \lim_{\kappa \to \infty }  \frac{c_{s'} + \sum_{s \in \vec{S}|s \neq s'} c_s \left( \frac{1+d_{s'}}{1 + d_s} \right) ^{\kappa}}{1 + \sum_{s \in \vec{S}|s \neq s'} \left( \frac{1 + d_{s'}}{1 + d_s} \right) ^{\kappa}}
\end{equation}
Now note that since $d_{s'} < d_s$ for all samples $s$ then, as $\kappa$ goes to infinity, the summations go to zero and hence
\begin{equation}
     \lim_{\kappa \to \infty}\hat{c}(\kappa) = c_{s'}
\end{equation}
If multiple samples are at this distance $d_{s'}$ then the numerator constant would be the sum of these samples and the denominator would be the number of them and hence the limit is the average of the claim rates of these samples.
Since $c_{s'}$ is a sample from the training set space then its expected value is  $\bar{c}$ and hence
\begin{equation}
    {\cal E}[\lim_{\kappa \to \infty}\hat{c}(\kappa)] = \bar{c}
\end{equation}
\end{proof}

One should note the following. As the number of training samples increases, more training samples will be available close to the test sample and hence the optimal value of $\kappa$ will increase. In the limit, the predicted value becomes the true mean for the features of the test sample thus achieving precise personalization.

\section{Conclusions and Future Work}

We presented a new model for automobile insurance risk assessment and demonstrated its effectiveness using real data. We showed how feature importance can be computed, how features can be selected and how model parameters are optimized. Finally we demonstrated how the model can be used in practice and results interpreted. Note that this approach can be applied to any regression problem and its performance will improve as the variance of the target metric decreases. 

Future work will include application of the model to other problems as well as on model improvements and increased computational efficiency. We also plan to investigate how recursive improvement of distance values can be used to increase accuracy (see \cite{hosein2021prediction} for an example of this improvement). Finally we plan to investigate properties of the model such as whether $E(\kappa)$ is convex when its gradient at $\kappa=0$ is negative.

\bibliographystyle{spbasic.bst}
\bibliography{claims}

\end{document}